\documentclass{article}
\pdfoutput=1

\usepackage{amsmath,amsfonts,bm}

\def\eqref#1{equation~\ref{#1}}
\def\Eqref#1{Equation~(\ref{#1})}

\def\1{\bm{1}}
\newcommand{\train}{\mathcal{D}}
\newcommand{\valid}{\mathcal{D_{\mathrm{valid}}}}

\def\va{{\bm{a}}}
\def\vb{{\bm{b}}}

\def\vh{{\bm{h}}}

\def\vp{{\bm{p}}}
\def\vq{{\bm{q}}}

\def\vu{{\bm{u}}}

\def\vw{{\bm{w}}}
\def\vx{{\bm{x}}}
\def\vy{{\bm{y}}}
\def\vz{{\bm{z}}}
\def\vrho{{\bm{\rho}}}

\def\eva{{a}}
\def\evb{{b}}

\def\evp{{p}}
\def\evq{{q}}

\def\evy{{y}}
\def\evz{{z}}

\def\vpt{\tilde{\bm{p}}}
\def\vqt{\tilde{\bm{q}}}
\def\evqt#1{{\tilde{q}_{#1}}}
\def\evpt#1{{\tilde{p}_{#1}}}
\def\evct#1{{\tilde{c}_{#1}}}

\def\mW{{\bm{W}}}

\DeclareMathAlphabet{\mathsfit}{\encodingdefault}{\sfdefault}{m}{sl}
\SetMathAlphabet{\mathsfit}{bold}{\encodingdefault}{\sfdefault}{bx}{n}

\def\gH{{\mathcal{H}}}

\def\gL{{\mathcal{L}}}

\def\gX{{\mathcal{X}}}
\def\gY{{\mathcal{Y}}}

\def\sR{{\mathbb{R}}}
\def\sS{{\mathbb{S}}}

\def\sZ{{\mathbb{Z}}}

\newcommand{\E}{\mathbb{E}}

\newcommand{\softmax}{\mathrm{softmax}}

\newcommand{\LS}{\mathrm{LS}}
\newcommand{\KD}{\mathrm{KD}}
\newcommand{\kdpt}{\mathrm{KD}\text{-pt}}
\newcommand{\kdrel}{\mathrm{KD}\text{-rel}}
\newcommand{\kdsim}{\mathrm{KD}\text{-sim}}
\newcommand{\kdtopk}{\mathrm{KD}\text{-topk}}

\newcommand{\kdptsim}{\mathrm{KD}\text{-pt+sim}}

\DeclareMathOperator*{\argmax}{arg\,max}
\DeclareMathOperator*{\argmin}{arg\,min}

\usepackage[preprint]{neurips_2020}

\usepackage[utf8]{inputenc} %
\usepackage[T1]{fontenc}    %
\usepackage{hyperref}       %
\hypersetup{colorlinks,     %
linkcolor=black,
citecolor=blue
}
\usepackage{xr-hyper}
\usepackage{url}            %
\usepackage{booktabs}       %
\usepackage{amsfonts}       %
\usepackage{nicefrac}       %
\usepackage{microtype}      %

\usepackage{amssymb}
\usepackage{amsthm}
\usepackage{amsmath}
\usepackage{cleveref}
\usepackage{thmtools}
\usepackage{thm-restate}
\usepackage{graphicx}
\usepackage{caption}
\usepackage{subcaption}
\usepackage{wrapfig}
\usepackage{multirow}
\aboverulesep=0ex
\belowrulesep=0ex

\makeatletter
\newcommand*{\addFileDependency}[1]{%
  \typeout{(#1)}
  \@addtofilelist{#1}
  \IfFileExists{#1}{}{\typeout{No file #1.}}
}
\makeatother

\usepackage{xcolor}

\newtheorem{thm}{Theorem}

\newtheorem{prop}[thm]{Proposition}

\title{Understanding and Improving Knowledge Distillation}

\author{%
  Jiaxi Tang\thanks{Work done when was a student at Simon Fraser University and while interning at Google.}, Rakesh Shivanna, Zhe Zhao, Dong Lin, Anima Singh, Ed H.Chi, Sagar Jain \\
  Google, Inc \\
  \texttt{\{jiaxit,rakeshshivanna,zhezhao,dongl,animasingh,edchi,sagarj\}@google.com} \\
}

\begin{document}

\maketitle

\begin{abstract}
Knowledge Distillation (KD) is a model-agnostic technique to improve model quality while having a fixed capacity budget. It is a commonly used technique for model compression, where a larger capacity teacher model with better quality is used to train a more compact student model with better inference efficiency. Through distillation, one hopes to benefit from student's compactness, without sacrificing too much on model quality.
Despite the large success of knowledge distillation, better understanding of how it benefits student model's training dynamics remains under-explored.
In this paper, we categorize teacher's knowledge into three hierarchical levels and study its effects on knowledge distillation: (1) knowledge of the `universe', where KD brings a regularization effect through label smoothing; (2) domain knowledge, where teacher injects class relationships prior to student's logit layer geometry; and (3) instance specific knowledge, where
teacher rescales student model's per-instance gradients based on its measurement on the event difficulty.
Using systematic analyses and extensive empirical studies on both synthetic and real-world datasets, we confirm that the aforementioned three factors play a major role in knowledge distillation.
Furthermore, based on our findings, we diagnose some of the failure cases of applying KD from recent studies.

\end{abstract}

\section{Introduction} Recent advances in artificial intelligence have largely been driven by learning deep neural networks, and thus, current state-of-the-art models typically require a high inference cost in computation and memory.
Therefore, several works have been devoted to find a better quality and computation trade-off, such as pruning~\citep{han2015learning} and quantization~\citep{han2015deep, jacob2018quantization}.
One promising and commonly used method for addressing this computational burden is Knowledge Distillation (KD), proposed by~\citet{hinton2015distilling}, which uses a larger capacity teacher model (ensembles) to transfer its `dark knowledge' to a more compact student model. Through distillation, one hopes to achieve a student model that not only inherits better quality from the teacher, but is also more efficient for inference due to its compactness. 
Recently, we have witnessed a huge success of knowledge distillation, irrespective of the model architecture and application domain~\citep{kim2016sequence, chen2017darkrank, tang2018ranking, Anil2018Large, he2019bag}.

Despite the large success of KD, surprisingly sparse research has been done to better understand the mechanism of how it works, which could limit the applications of KD and also raise unexpected or unexplainable results. For example, to successfully `distill' a better student, one common practice is to have a teacher model with as good quality as possible. However, recently~\citet{mirzadeh2019improved} and~\citet{muller2019does} have found this intuition would fail under certain circumstances. Furthermore,~\citet{Anil2018Large} and~\citet{furlanello2018born} have analyzed that even without using a powerful teacher, distilling a student model to itself using mutual or self-distillation also improves quality.
To this end, some researchers have made attempts on understanding the mechanism of KD.
For example,~\citet{yuan2019revisit} connects label smoothing to KD. \citet{furlanello2018born} conjectures KD's effect on re-weighting training examples. In this work, we found that the benefits of KD comes from a combination of multiple effects, and propose partial KD methods to dissect each of the effects.

This work is an attempt to shed light upon the `dark knowledge' distillation, making this technique less mysterious. 
More specifically, we make the following contributions:
\begin{itemize}
\itemsep0em
\item For KD on multi-class classification task, we systematically break down its effects into: (1) label smoothing from universal knowledge, (3) injecting domain knowledge of class relationships to student's output logit layer geometry, and (2) gradient rescaling based on teacher's measurement of instance difficulty. We provide theoretical analyses on how KD exhibits these effects, and improves student model's quality (Section~\ref{sec:analysis}).
\item We propose partial-distillation techniques using hand-crafted teacher's output distribution (Section~\ref{sec:pseudo_kd}) to simulate and validate different effects of knowledge distillation.
\item We empirically demonstrate and confirm our hypothesis on the effects of KD on both synthetic and real-world datasets.
Furthermore, using our understanding, we diagnose some recent failures of applying KD~(Section~\ref{sec:application}).
\end{itemize}

\section{Related Work} In the context of deep learning, knowledge transfer has been successfully used to effectively compress the power of a larger capacity model (a teacher) to a smaller neural network (a student). Adopting this teacher-student learning paradigm, many forms of \emph{knowledge} have been investigated: layer activations~\citep{romero2014fitnets}, auxiliary information~\citep{vapnik2015learning}, Jacobian matrix of the model parameters~\citep{czarnecki2017sobolev, srinivas2018knowledge}, Gram matrix derived from pairs of layers~\citep{yim2017gift}, activation boundary~\citep{heo2019knowledge}, etc. 
Among these, the \emph{original (or vanilla)} KD -- learning from teacher's output distribution~\citep{hinton2015distilling} is the most popular.
Besides compression, $\KD$~has also been successfully applied to improve generalization~\citep{furlanello2018born}, reproducibility~\citep{Anil2018Large}, defend adversarial attacks~\citep{papernot2016distillation}, etc. 

Though $\KD$ has been successfully applied in various domains, there has been very few attempts on understanding how and why it helps neural network training. \citet{hinton2015distilling} argued that the success of $\KD$ could be attributed to the output distribution of the incorrect classes, which provides information on class relationships. From learning theory perspective,~\citet{vapnik2015learning} studied the effectiveness of knowledge transfer using auxiliary information, known as \emph{Privileged Information}. Following which,~\citet{lopez2015unifying} established the connection between $\KD$~and privileged information. Recently, \citet{phuong2019towards} showed a faster convergence rate from distillation. However, most of the existing theoretical results rely on strong assumptions (e.g., linear model, or discarding ground-truth when training the student), and also fails to explain %
the recent failure cases of distilling from a better quality teacher~\citep{mirzadeh2019improved, muller2019does}.

The most relevant work to our own is \citep{furlanello2018born}. Though the main focus of their work is to propose $\KD$ techniques to boost quality, they also provide intuitions for the effectiveness of $\KD$. In our work, we offer theoretical analysis on some of their conjectures, and improve on erroneous assumptions. Furthermore, we systematically investigate the mechanism behind knowledge distillation by decomposing its effects, and analyzing how each of these effects helps with student model's training using our proposed partial-distillation methods.

\section{Analyzing Mechanisms of Knowledge Distillation} \label{sec:analysis} In this section, we provide a systematic analyses for the mechanisms behind $\KD$ based on theoretical and empirical results. We start by introducing essential background, dissect distillation benefits from three main effects, and conclude by connecting and summarizing these effects.

\textbf{Background.}
Consider the task of classification over $[K] := \{1\ldots K\}$ classes, given $(\vx, \vy) \in \gX\times\gY$, with $\vy\in\{0,1\}^K$ denoting one-hot encoded label, and $t \in [K]$ denoting the ground-truth class. The goal is to learn a parametric mapping function $f(\vx; \theta) : \gX \mapsto \gY$ where $\theta \in \Theta$  can be characterized by a neural network. We learn the parameters $\theta$ via Empirical Risk Minimization of the surrogate loss function, typically optimized using a variant of Stochastic Gradient Descent:
$ \theta^* = \argmin_{\theta \in \Theta} \gL(\vy, f(\vx; \theta)),$
where $\gL$ is the cross-entropy loss $\gH(\vy, \vq)=\sum_{i=1}^{K} -\evy_i \log \evq_i$, and $\vq=f(\vx; \theta)$ is the network's output distribution computed by applying $\softmax$ over the logits $\vz$: 
$    \evq_i = \softmax(\evz_i) = \frac{\exp(\evz_i)}{\sum_{j=1}^K \exp(\evz_j)}$. 
We could also scale the logits by temperature $T>1$ to get a smoother distribution $\evqt{i} = \softmax(\evz_i/T)$. Gradient of a single-sample \emph{w.r.t.} logit $\evz_i$ is given by:
\begin{equation}
    \label{eq:grad}
    \partial\gL / \partial \evz_i = \evq_i - \evy_i. \quad\text{Lets denote }\partial_i = \partial\gL / \partial \evz_i.
\end{equation}
\subsection{Knowledge of the universe -- benefits from label smoothing}

Label Smoothing (LS)~\citep{szegedy2016rethinking} is a technique to soften one-hot label $\vy$ by a factor of $\epsilon$, such that the modified label becomes:  $\tilde{\evy}_i^{\LS}=(1-\epsilon)\evy_i + \epsilon/K$. Label smoothing mitigates the over-confidence issue of neural networks, and improves model calibration~\citep{muller2019does}.
Knowledge Distillation~($\KD$) on the other hand, uses an additional teacher model's predictions $\vp$ for training:
$$\theta^*_{\KD} =\argmin_{\theta \in \Theta} \big\{ \gL^{\KD}(\vy, \vp, f(\vx; \theta), \lambda, T) =(1-\lambda)\gH(\vy, \vq) + \lambda \gH(\vpt, \vqt)\big\},$$
where $\lambda\in[0,1]$ is a hyper-parameter; and $\vqt$ and $\vpt$ are temperature softened student and teacher's predictions. Logits gradient for $\KD$ is given by:
\begin{equation}\label{eq:dist_grad}
\partial\gL^{\KD}/\partial z_i = (1-\lambda)(q_i - y_i) + (\lambda/T)(\evqt{i} - \evpt{i}).  \quad\text{Lets denote } \partial^{\KD}_i = \partial\gL^{\KD}/\partial z_i.
\end{equation}
\citet{yuan2019revisit} established the connection between $\KD$ and LS: In terms of gradient propagation, $\KD$ is equivalent to LS, when $T=1$, and teacher's probability distribution $\vp$ follows a uniform distribution, i.e., a Bayesian prior of the universe.
In other words, we can view $\KD$ as an adaptive version of label smoothing, suggesting it should inherit most of the regularization benefits from label smoothing, such as model regularization and better calibration, not being over-confident~\citep{muller2019does}. %

In the next two subsections, we analyze the unique characteristics of real teacher's distribution %
over uniform distribution, and demonstrate how they could potentially facilitate student model's training.

\subsection{Domain knowledge -- teacher injects class relationships prior
}\label{sec:analysis_class_rel}

KD leverages class relationships as captured by the teacher's probability distribution $\vp$ over the \emph{incorrect classes}. As argued by~\citet{hinton2015distilling} on MNIST dataset, model assigns relatively high probability for class `7', when the ground-truth class is `2'. In this section, we first confirm their hypothesis using empirical studies. Then, we provide new insights to explain how the teacher informs the class relationships to its student at optimality, and improves model quality. %

To illustrate that the teacher's distribution $\vp$ captures class relationships, we train ResNet-56 on CIFAR-100 dataset. CIFAR-100 contains 100 classes over 20 super-classes, with each super-class containing 5 sub-classes. Figures~\ref{fig:heatmap-k=100_t=3} and~\ref{fig:heatmap-k=100_t=10} show the heatmap for Pearson correlation coefficient on teacher's distribution $\vp$ at different temperatures. We sort the class indexes to ensure that the 5 classes from the same super-class appear next to each other. With a lower temperature in Figure~\ref{fig:heatmap-k=100_t=3}, there's no pattern on the heatmap showing class relationships. But as we increase the temperature in Figure~\ref{fig:heatmap-k=100_t=10}, classes within the same super-class clearly have a high correlation with each other, as seen in the block diagonal structure. This observation verifies that teacher's distribution $\vp$ indeed reveals class relationships, with proper tuning on the $\softmax$ temperature. %

In this work, we found that the teacher's predictions on incorrect classes also provides a prior for student model training. Before diving into the details, we recall the case of label smoothing~\citep{szegedy2016rethinking}:
\begin{itemize}
\itemsep0em
    \item From an optimization point of view,~\citet{he2019bag} showed that there is an optimal constant margin $\log(K(1-\epsilon)/\epsilon + 1)$, between the logit of the ground-truth $\evz_t$, and all other logits $\evz_{-t}$, using a label smoothing factor of $\epsilon$. For fixed number of classes $K$, the margin is a monotonically decreasing function of $\epsilon$.
    \item From geometry perspective, \citet{muller2019does} showed the logit $\evz_k = \vh^\top \vw_k$ for any class $k$ is a measure of squared Euclidean distance $\|\vh - \vw_k\|^2$ on latent space between the activations of the penultimate layer\footnote{Here $\vh$ can be concatenated with a ``1'' to account for the bias.} $\vh$, and weights $\vw_k$ for class $k$ in the last logit layer. 
\end{itemize}
Above findings suggest that label smoothing encourages $\|\vh - \vw_t\|^2 \ge \|\vh - \vw_{-t}\|^2$ and pushes all the other incorrect classes equally apart.
Following a similar proof technique, we extend to $\KD$:
\begin{prop}\label{prop:distance}
With $\KD$, the optimal solution of student's final logit layer weights $\{\vw^*_k,~\forall k\in[K]\}$ enforces different inter-class distances based on teacher's probability distribution $\vp$: 
$$\|\vh - \vw^*_i\|^2 < \|\vh - \vw^*_j\|^2~~~\text{iff}~~~\evp_i > \evp_j,~\forall i,j \in [K] \backslash t,$$
where $\vh$ is the activations of the penultimate layer. (See proof in Suppl. Section~\ref{sec:si-proof})
\end{prop}
From Figure~\ref{fig:heatmap-k=100_t=10}, teacher assigns higher probability to the classes within the same super-class, and hence $\KD$ encourages hierarchical clustering of logit layer weights based on the class relationships. 

\begin{figure*}[t]
    \centering
    \begin{subfigure}[b]{0.22\textwidth}
            \includegraphics[width=\textwidth]{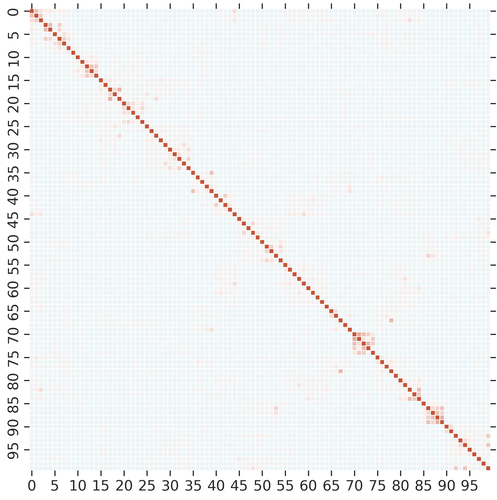}
            \subcaption{}
            \label{fig:heatmap-k=100_t=3}
    \end{subfigure}
    ~
    \begin{subfigure}[b]{0.22\textwidth}
            \includegraphics[width=\textwidth]{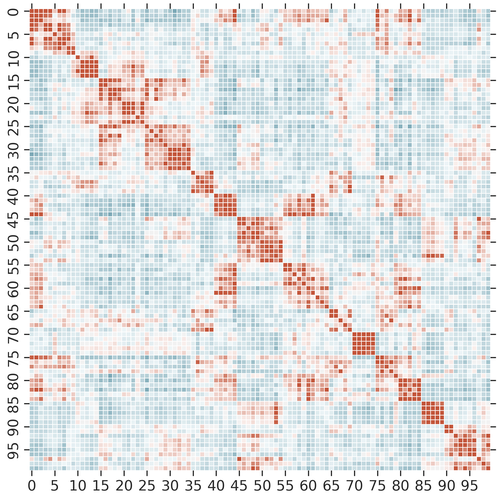}
            \subcaption{}
            \label{fig:heatmap-k=100_t=10}
    \end{subfigure}
    ~
    \begin{subfigure}[b]{0.22\textwidth}
            \includegraphics[width=\textwidth]{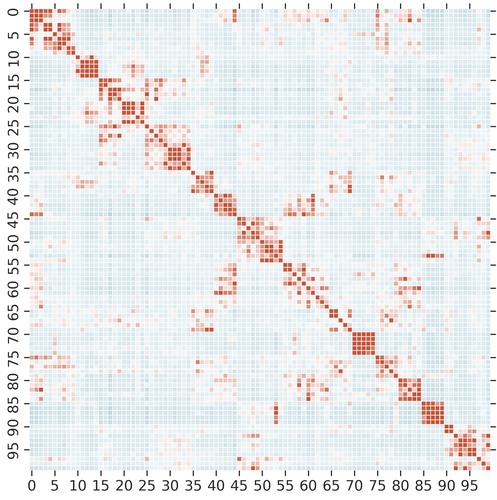}
            \subcaption{}
            \label{fig:heatmap-k=10-t=10}
    \end{subfigure}
    ~
    \begin{subfigure}[b]{0.26\textwidth}
            \includegraphics[width=\textwidth]{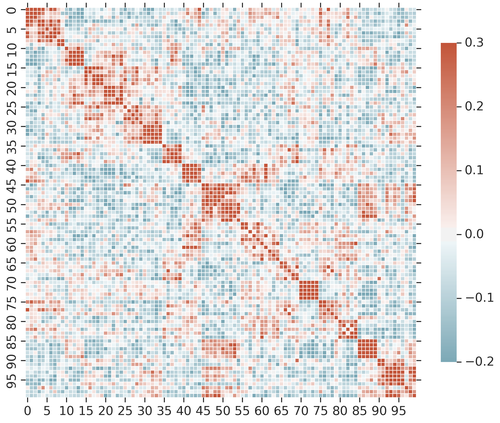}
            \subcaption{}
            \label{fig:heatmap-cosine-sims}
    \end{subfigure}
    \caption{\small Using 10K samples from CIFAR-100 for ResNet-56, we plot Pearson correlations of output probability $\vp$ with varying $\softmax$ temperature (a) $T=3$, (b) $T=10$, and (c) $T=10$ where only top-10 largest values in $\vp$ are preserved. In (d), we show cosine similarities computed from the weights of the final logits layer. Since classes within the super-class are grouped together, we see a block diagonal class correlation structure. %
    }
\end{figure*}
\subsection{Instance specific knowledge -- teacher rescales gradients based on event difficulty}\label{sec:analysis_re-weight}
Another important characteristic of the teacher distribution $\vp$ is that the prediction (confidence) $\evp_t$ on the \emph{ground-truth class} is different across instances. Comparing ratio of gradients (\cref{eq:grad,eq:dist_grad}):
\begin{equation}\label{eq:grad_scale}
\begin{aligned}
\omega_i = \frac{\partial^{\KD}_i}{\partial_i} = (1-\lambda) + \frac{\lambda}{T}\left( \frac{\evqt{i} - \evpt{i}}{\evq_i - \evy_i}\right),
\end{aligned}
\end{equation}
we find that $\KD$ performs gradient rescaling in the logits space based on teacher model's prediction confidence $\evp_t$ on the ground-truth class $t$. The gradient rescaling factor $\omega_i$ is larger on average, when teacher is more confident on making the right prediction. More specifically, we state the following:

\begin{prop}[\textbf{Gradient Rescaling}]\label{prop:conf}
Given any example $(\vx, \vy) \in \gX \times \gY$, let $\evpt{t} = \evqt{t} + \evct{t} + \eta$, where $\evct{t} > 0$ is teacher's relative prediction confidence on the ground-truth class $t\in [K]$ and $\eta$ is a zero-mean random noise. Then the logit's gradient rescaling factor by applying $\KD$ is given by:
$$\E_\eta\left[\frac{\partial^{KD}_t}{\partial_t}\right] = \E_\eta\left[\frac{\sum_{i\in[K]\backslash t}\partial^{KD}_i}{\sum_{i\in[K]\backslash t}\partial_i}\right] = (1 - \lambda) + \frac{\lambda}{T}\left(\frac{\evct{t}}{1 - \evq_t}\right).$$
\end{prop}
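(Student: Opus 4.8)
The plan is to evaluate the numerator $\sum_{i\in[K]}|\partial^{KD}_i|$ and the denominator $\sum_{i\in[K]}|\partial_i|$ of the ratio separately, exploiting the fact that the denominator does not depend on the noise $\eta$ and can therefore be pulled out of the expectation. First I would dispose of the denominator using the sign structure of the vanilla gradient $\partial_i=\evq_i-\evy_i$: on the ground-truth class $\partial_t=\evq_t-1\le 0$, while on every other class $\partial_i=\evq_i\ge 0$. Since $\sum_{i\neq t}\evq_i=1-\evq_t$, this yields $\sum_i|\partial_i|=(1-\evq_t)+(1-\evq_t)=2(1-\evq_t)$, a deterministic quantity.

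For the numerator I would split the sum into the ground-truth coordinate and the rest, invoking the key sign assumption that, for a confident and correct teacher, $t$ is the unique coordinate on which $\partial^{KD}_i$ is negative. Substituting the hypothesis $\tilde{\evp}_t=\tilde{\evq}_t+\tilde{\evc}_t+\eta$ into $\partial^{KD}_t=(1-\lambda)(\evq_t-1)+\frac{\lambda}{T}(\tilde{\evq}_t-\tilde{\evp}_t)$ gives $\partial^{KD}_t=-(1-\lambda)(1-\evq_t)-\frac{\lambda}{T}(\tilde{\evc}_t+\eta)$, so $|\partial^{KD}_t|=(1-\lambda)(1-\evq_t)+\frac{\lambda}{T}(\tilde{\evc}_t+\eta)$. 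For the remaining classes I would use that $\tilde{\vq}$ and $\tilde{\vp}$ each normalize to one, hence $\sum_{i\neq t}(\tilde{\evq}_i-\tilde{\evp}_i)=\tilde{\evp}_t-\tilde{\evq}_t=\tilde{\evc}_t+\eta$, together with $\sum_{i\neq t}\evq_i=1-\evq_t$, to obtain $\sum_{i\neq t}|\partial^{KD}_i|=(1-\lambda)(1-\evq_t)+\frac{\lambda}{T}(\tilde{\evc}_t+\eta)$. Adding the two pieces gives $\sum_i|\partial^{KD}_i|=2(1-\lambda)(1-\evq_t)+\frac{2\lambda}{T}(\tilde{\evc}_t+\eta)$.

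The final step is to take the expectation over the zero-mean noise, which kills the $\eta$ term, and divide by the deterministic denominator:
\[
\E_\eta\!\left[\frac{\sum_i|\partial^{KD}_i|}{\sum_i|\partial_i|}\right]=\frac{2(1-\lambda)(1-\evq_t)+\frac{2\lambda}{T}\tilde{\evc}_t}{2(1-\evq_t)}=(1-\lambda)+\frac{\lambda}{T}\left(\frac{\tilde{\evc}_t}{1-\evq_t}\right).
\]

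The main obstacle is justifying the sign bookkeeping that underlies both absolute-value sums: the clean cancellation relies on class $t$ being the single negative coordinate of $\partial^{KD}$, and on the perturbation $\eta$ being small enough over its support that it never flips the sign of any coordinate (otherwise $\E_\eta|\partial^{KD}_t|$ would incur a Jensen-type correction rather than simply replacing $\eta$ by its mean). I would argue these hold in the regime the proposition targets — a teacher correct and confident on the ground truth at moderate temperature — so that $\tilde{\evc}_t+\eta$ remains positive, the term $(1-\lambda)(1-\evq_t)$ dominates on class $t$, and the positive student mass $(1-\lambda)\evq_i$ keeps every off-target coordinate positive. I would state these explicitly as the hypotheses under which the identity is exact.
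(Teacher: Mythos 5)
Your proof is correct and follows essentially the same route as the paper's: both reduce the ratio to the ground-truth coordinate by showing (via the normalization of $\vq$, $\tilde{\vq}$, $\tilde{\vp}$ and the sign structure of the gradients) that $\sum_{i\neq t}|\partial^{\KD}_i| = |\partial^{\KD}_t|$ and $\sum_{i\neq t}|\partial_i| = |\partial_t| = 1-\evq_t$, then kill $\eta$ by taking expectation. Your explicit flagging of the hypotheses needed for exactness — that $\eta$ never flips the sign of $\tilde{\evc}_t+\eta$ or of any off-target coordinate, which the paper handles only via a footnote assuming $\evq_i \ge \evp_i$ for $i\neq t$ — is a point of added care, not a deviation in approach.
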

See proof in Suppl. Section~\ref{sec:si-proof}. At a given snapshot during training, we could assume ${\evct{t}}$ to be a constant for all examples. Then for any pairs of examples $(\vx, \vy)$, $(\vx', \vy') \in \gX \times \gY$, if the teacher is more confident on one of them, i.e., $\evp > \evp'$, then the average $\omega$ for all classes will be greater than $\omega'$.%

To validate our claim, in Figure~\ref{fig:p_t-w_t}, we plot the relationship between $\omega_t$ and $\evp_t$ at the end of training. On CIFAR-100~\citep{krizhevsky2009learning}, we use ResNet~\citep{he2016deep} with depth 20 as the student model, and depth 56 as the teacher (see Suppl. Section~\ref{sec:si-detail} for more details). The plot shows a clear positive correlation between the two. Notably, \emph{the correlation will be even stronger when closer to the beginning of training}.

In~\citep{furlanello2018born}, the authors conjecture that per-example weight is associated with the largest value in $\vp$.
In Proposition~\ref{prop:conf}, we show an alternative gradient rescaling effect. It's important to distinguish that the weight is associated with teacher's confidence on ground-truth, instead of the largest value.
Once the teacher makes a wrong prediction, using the largest value would yield contradictory result. It is also trivial to show that when we have two classes, $\omega_{i \neq t} = \omega_t$, the primary effect of $\KD$ is gradient rescaling. So we can regard the use of $\KD$ on binary classification~\citep{Anil2018Large} as taking the binary log-loss, and multiply with the weight $\omega_t$.

\begin{wrapfigure}{r}{0.4\textwidth}
    \centering
    \includegraphics[width=0.4\textwidth]{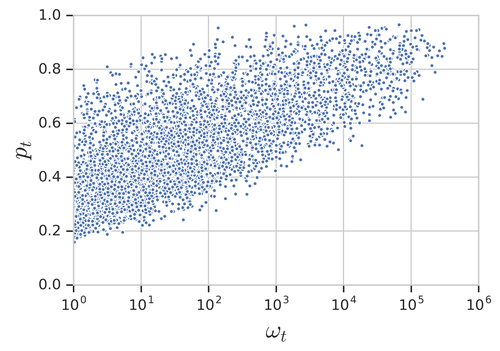}
    \vspace{-2em}
    \caption{\small Applying $\KD$ for ResNet-20 student model with ResNet-56 as the teacher on CIFAR-100, we plot $\evp_t$ vs. $\omega_t$ (in log scale) with 10K samples at the end of training.
    }
    \label{fig:p_t-w_t}
\end{wrapfigure}
Altogether, we show knowledge distillation has an effect of gradient rescaling with its factor associated with the teacher's prediction on ground-truth. Weight will be higher when $\evp_t$ is larger. Alternatively, this suggests that $\KD$ would magnify the gradients (w.r.t. logits), for training examples that are considered easier from teacher's perspective, and vice versa; which has a similar flavor to Curriculum Learning. \citet{bengio2009curriculum} suggested that this may speedup training convergence and helps optimization to reach a better local minima. 
This is also related to~\cite{roux2016tighter}, which shows that re-weighting examples during training using model's prediction confidence leads to a tighter bound on the classification error and leads to better generalization.

\subsection{Summary on primary effects of KD}\label{sec:summary_effects}
We conclude the section by summarizing KD's primary effects for classification task from three hierarchical levels of knowledge.
From the most general perspective, KD brings a regularization effect by introducing smoothened teacher distribution (i.e., a Bayesian prior of the universe). %
Then from domain knowledge, teacher's probability mass on the \emph{incorrect classes} reflect class relationships, therefore providing more guidance to the student. We showed the guidance is through influencing student's final logit layer geometry:
instead of pushing penultimate layer activations equally apart from the incorrect classes' weights as in label smoothing, $\KD$, as an adaptive label smoothing, encourages differences in inter-class distances.
Finally, for instance specific knowledge, the teacher model rescales student models's gradients with its measurement on event difficulty, i.e., confidence on the \emph{ground-truth class}.
As a result, all the three levels of knowledge complement each other, which could potentially facilitate student model's training process and further improve model generalization.

\section{Isolating Effects by Partial Knowledge Distillation Methods} \label{sec:pseudo_kd} To further dissect the different effects of $\KD$, in this section, we synthesize hand-crafted teacher distributions, denoted by $\vrho$.
Each synthetic teacher distribution $\vrho$ contains partial information from the real teacher's distribution $\vp$, enabling us to isolate and study the effects of $\KD$ (namely, gradient rescaling and prior on optimal geometry).
We propose $\kdpt$ and $\kdsim$
-- former only incorporates the gradient rescaling effect and excludes class relationship information, and the latter only incorporates class relationships but not gradient rescaling. We then try to combine the two effects together in an attempt to approximate the performance of vanilla $\KD$.

\textbf{Examine gradient rescaling effect by KD-pt.}
As discussed, label smoothing neither has gradient rescaling effect, nor information about class relationships, due to its uniform teacher distribution. However, if we borrow $\evp_t$ (prediction on ground truth class $t\in[K]$) from the real teacher's probability distribution $\vp$, we can synthesize a partial teacher distribution that is able to incorporate gradient rescaling effect. More specifically, we craft teacher's probability distribution $\vrho^{\text{pt}}$ as follows:
$\rho_i^{\textrm{pt}} = \evp_t ~\text{if } i = t, (1 - \evp_t)/(K-1)~\text{otherwise}.$
From Proposition~\ref{prop:conf}, it is trivial to see that $\kdpt$ is capable of rescaling gradients for different examples. However, it does not capture class relationships.%

\textbf{Examine optimal geometry prior by KD-sim.}
Following a similar methodology, we synthesize a teacher distribution that only captures class relationships, and ignores gradient rescaling.
To achieve this, we use the weights of the last logit layer $\mW \in \sR^{K \times d}$ from the teacher model to obtain class relationships. We believe the teacher, due to its larger capacity is able to encode class semantics in the weights of the last logit layer.
Thus, we create a distribution $\vrho^{\text{sim}}$ as the $\softmax$ over cosine similarity\footnote{In practice, 
besides tuning the temperature of the $\softmax$, one could also raise the similarities to a power $<1$ to amplify the resolution of cosine similarities. Please refer to Section~\ref{sec:si-detail} in Suppl. for more details.} of the weights: $\vrho^{\text{sim}} = \softmax(\hat{\vw}_t \Hat{\mW}^\top)$, where $\Hat{\mW}\in \sR^{K\times d}$ is the $\ell^2$-normalized logit layer weights, and $\hat{\vw}_t = \vw_t / \|\vw_t\|$ is the $t$-th row of $\Hat{\mW}$ corresponding to the ground truth. 
Though other distance metrics could also be used as a measure of class similarity, we leave the discussion of analysing the different choices as future work.
To verify our assumption, we check the heatmap of cosine similarities in Figure~\ref{fig:heatmap-cosine-sims}, which clearly shows a similar pattern as the Pearson correlation of the teacher's distribution $\vp$ in Figure~\ref{fig:heatmap-k=100_t=10}.
From Propositions~\ref{prop:conf} and \ref{prop:distance}, our proposed method, though simple and straightforward, can preserve class relationships only, and therefore achieve our purpose.

Note that $\kdsim$ doesn't require a prior knowledge of class hierarchy, but if available (as in CIFAR-100), we could also synthesize a teacher's probability distribution apriori.
In Suppl. Section C, we synthesize $\vrho$ by setting different values for (1) ground-truth class $t$, (2) classes within the same super-class of $t$, and (3) other incorrect classes. The quality of the resulting method is slightly poor compared to $\kdsim$, but still improves student model's generalization.

\textbf{Compounded effects.}
We also explore a combination of the above proposed orthogonal partial $\KD$ techniques to validate if the resulting method can approximate vanilla $\KD$.
We study a simple linear combination of synthetic teacher's probability distribution, that is, $(1-\alpha) \vrho^{\text{pt}} + \alpha \vrho^{\text{sim}}$ and name the method $\kdptsim$. %
It is easy to verify that this compounded method smoothens out the label distribution, rescales gradients, and also injects optimal prior geometry through class relationships.

\section{Empirical Studies} \label{sec:application}  In this section, we evaluate the effectiveness of our proposed partial-distillation methods, to better understand how much each of these effects benefits the student model, and how the improvements are associated with the dataset properties. With our understandings, we propose a simple way to improve distillation quality and we diagnose the recent failures of KD.

\subsection{How does class correlations influence distillation?}\label{sec:exp_synthetic}
Performance of $\KD$ is dependent on the dataset properties. A natural question is -- \emph{Does $\KD$ perform only gradient rescaling when all the classes are uncorrelated to each other?} We showed this to be true for binary classification (Section~\ref{sec:analysis_re-weight}). To answer the same for multi-class classification task, we generate synthetic dataset, where we can control the class similarities within the same super-class.

\textbf{Setup.}
Inspired by~\citep{ma2018modeling}, we synthesize a classification dataset with $K$ classes and $C$ super-classes, such that each super-class has $K/C$ classes, and each class will be assigned with a carefully generated basis vector, so that we could control the class correlations within the same super-class. Also, data points can be generated to be \emph{linearly non-separable} to control for task difficulty.
See Suppl. Section~\ref{sec:si-synthetic} for more details. %
In our experiments, we set input dimension $d=500$ with $K=50$ and $C=10$. We use $|\train|=500k$ data-points for training, and $|\valid|=50k$ for validation. We use a simple 2-layer fully-connected neural network with $\tanh$ activation, and hidden layer dimensions $64$ for the student, and $128$ for the teacher. 
By injecting non-linearities when generating the synthetic data, we are able to control the task difficulty trade-off (i.e., not too easy, but hard enough to have a large margin between the two models for $\KD$).
Figure~\ref{fig:synthetic-sim-m} in Suppl. shows a visualization of a toy dataset.

\textbf{Results and analysis.}
Table~\ref{tb:exp_synthetic_data} shows the classification accuracy on the validation set when varying class similarity within each super-class (denote as $\tau$).
We notice a large margin between the teacher and student, and
Knowledge Distillation ($\KD$) benefits the student significantly.
Interestingly, when all classes are uncorrelated ($\tau=0.0$), we notice $\kdpt$ even outperform $\KD$, verifying our claim of gradient scaling effect of $\KD$.
When increasing $\tau$, we see a significant improvement in performance of $\kdsim$, suggesting that the injected prior knowledge of class relationships can also aid student model in generalization.
Note that for this task, the data points that are close to the decision boundary are harder to classify, and can be regarded as difficult examples.
It is worth mentioning that the performance of both the student and teacher drastically drop when having larger $\tau$, as the classes within the same super-class will be too similar and difficult to distinguish. 
\begin{table}
\small
\parbox{.45\linewidth}{
\centering
\caption{\small Accuracy (\%) on synthetic dataset with different class similarities within each super-class.}
\begin{tabular}{l|ccc}
\toprule
\textbf{Method} & \textbf{$\tau=0.0$} & \textbf{$\tau=0.3$} & \textbf{$\tau=0.4$}\\
\midrule
Teacher & 54.82 & 60.60 & 61.53\\
Student & 38.45 & 42.81 & 47.10\\
\midrule
KD & 55.97 & 56.95 & \underline{57.70}\\
KD-pt & \underline{57.05} & \underline{56.99} & 56.25\\
KD-sim & 51.90 & 53.69 & 57.20\\
\bottomrule
\end{tabular}
\label{tb:exp_synthetic_data}

\caption{\small Mean and Std. for top-1 accuracy (\%) over 4 individual runs. Best $k$ for $\kdtopk$ is 25 and 500 for CIFAR-100 and ImageNet, resp.}
\begin{tabular}{l|cc}
\toprule
\textbf{Method} & \textbf{CIFAR-100} & \textbf{ImageNet}\\
\midrule
Teacher & 75.68~{\scriptsize $\pm$~0.42} & 77.98~{\scriptsize $\pm$~0.12} \\
Student & 72.51~{\scriptsize $\pm$~0.27} & 76.34~{\scriptsize $\pm$~0.11}\\
\midrule
LS & 73.87~{\scriptsize $\pm$~0.16}& 76.83~{\scriptsize $\pm$~0.07}\\
KD & 75.94~{\scriptsize $\pm$~0.26}& 77.49~{\scriptsize $\pm$~0.07}\\
\midrule
KD-pt & 75.08~{\scriptsize $\pm$~0.16} & 77.00~{\scriptsize $\pm$~0.08}\\
KD-sim & 74.30~{\scriptsize $\pm$~0.17}& 76.95~{\scriptsize $\pm$~0.07}\\
KD-pt+sim & 75.24~{\scriptsize $\pm$~0.17}& 77.17~{\scriptsize $\pm$~0.08}\\
\midrule
KD-topk & \underline{76.17}~{\scriptsize $\pm$~0.25} & \underline{77.85}~{\scriptsize $\pm$~0.03}\\
\bottomrule

\end{tabular}
\label{tb:exp_real_data}
}
\hfill
\parbox{.45\linewidth}{
\centering

\caption{\small Best validation and test Perplexity (lower is better) over 4 individual runs on PTB language modeling. Best $k$ value for $\kdtopk$ is 100.}
\begin{tabular}{l|ccc}
\toprule
\textbf{Method} & \textbf{\#Params} & \textbf{Validation} & \textbf{Test}\\
\midrule
Teacher & 24.2M & 60.90 & 58.58\\
Student & 9.1M & 64.17 & 61.55\\
\midrule
KD & 9.1M & 64.04 & 61.33\\
KD-topk & 9.1M & \underline{63.59} & \underline{60.85}\\
\bottomrule
\end{tabular}
\label{tb:exp_real_data_lm}

\includegraphics[width=0.4\columnwidth]{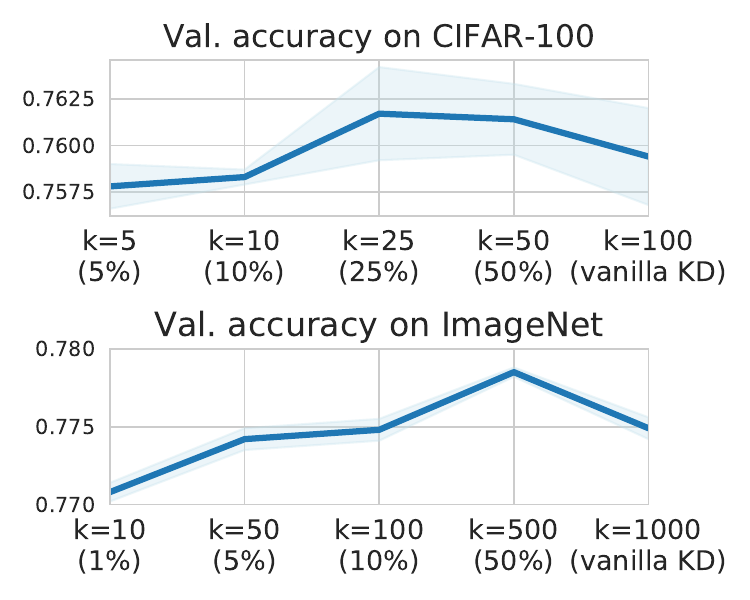}
\label{fig:exp_topk}
\vspace{-1em}
\captionof{figure}{\small Top-1 accuracy vs. $k$ for $\kdtopk$ from 4 individual runs on CIFAR-100 and ImageNet.}
}
\end{table}

\subsection{How effective are the partial-distillation methods?}
We next use two popular image classification datasets -- CIFAR-100~\citep{krizhevsky2009learning} and ImageNet~\citep{russakovsky2015imagenet} to analyze the quality of our proposed partial-distillation methods, and also to verify if we could approximate the performance of $\KD$ by compounding effects.

\textbf{Setup.}
On CIFAR-100 we use ResNet-20 as the student, and ResNet-56 as the teacher. On ImageNet with 1000 classes, we use ResNet-50 as the student, and ResNet-152 as the teacher.
For more details, please refer to Section~\ref{sec:si-detail} in Suppl.
Note that instead of using different model families as in~\citep{furlanello2018born,yuan2019revisit}, we use the same model architecture (i.e., ResNet) with different depths for the student and teacher to isolate any unknown effects introduced by model family discrepancy.

\textbf{Results and analysis.}
Table~\ref{tb:exp_real_data} shows the overall performance with the best hyper-parameters for each of the methods. On both datasets, teacher model is much better than the student, and label smoothing ($\LS$) improves student model's generalization. $\KD$ can further boost student model's quality by a large margin, especially on CIFAR-100, where $\KD$ even outperforms the teacher. We try to uncover the different benefits from distillation using partial-$\KD$ methods. Both $\kdpt$, and $\kdsim$ outperforms $\LS$; especially $\kdpt$ on CIFAR-100. This suggests that the different effects from $\KD$ benefits the student in different aspects depending on the dataset. 
Furthermore, by combining the two effects together in $\kdptsim$ (using $\alpha=0.5$), we see a further improvement in quality. 

\subsection{Regulated knowledge sharing improves distillation}
Following our understanding, any methods that can enhance and balance knowledge sharing at the three granular levels could potentially improve knowledge distillation.
Extending $\kdpt$, we take a step forward to use the top-$k$ largest values of teacher's probability $\vp$, and uniformly distributes the rest of the probability mass to the other classes, we name this method as $\kdtopk$.
For better intuition, from Figure~\ref{fig:heatmap-k=10-t=10} we observe that only preserving top-$10$ largest values could closely approximate the class correlations as in the full teacher's distribution $\vp$, and is also less noisy. 
This finding shows that only a few incorrect classes that are strongly correlated with the ground-truth class are useful for $\KD$ to boost the domain knowledge, and the probability mass on other classes are random noise (which is not negligible under high temperature $T$), and only \emph{has the effect of label smoothing in expectation}.
Furthermore, $\kdtopk$ can better utilize instance specific class relationships, since in $\kdsim$, all examples from the same class will have the same relationships to the other classes, which is restricted. 
For example, on MNIST dataset, only some versions of `2' looks similar to `7'.

Using the above intuition, we test $\kdtopk$ for image classification on CIFAR-100 and ImageNet, and language modeling on Penn Tree Bank (PTB) dataset. We apply state-of-the-art LSTM model~\citep{merity2017regularizing} with different capacities for the teacher and student. 
Details of PTB dataset and model specifications are in Section~\ref{sec:si-detail} of Suppl. For image classification, the performance of $\kdtopk$ is shown in the last row of Table~\ref{tb:exp_real_data}. We see that $\kdtopk$ outperforms $\KD$ on both datasets. For language modeling, the results are shown in Table~\ref{tb:exp_real_data_lm}, which suggests a similar trend for $\kdtopk$.
We plot the performance uplift of $\kdtopk$ along with $k$ in Figure~\ref{fig:exp_topk}. As shown, the best performance is achieved with a proper tuning of $k < K$, which captures class relationships and also reduces noise. The results align with our understanding of KD, and also suggest a way to achieve better distillation quality.

\subsection{Diagnosis of failure cases}
\label{sec:failure_cases}
\begin{table}
\small
\parbox{.4\linewidth}{
\centering
\caption{On CIFAR-100, Top-1 accuracy (\%) of various KD methods using teachers with or without label smoothing.
}
\begin{tabular}{l|cc}
\toprule
\multirow{2}{*}{\textbf{Method}} & \multicolumn{2}{c}{Teacher: ResNet-56}\\
& $\epsilon=0.0$ & $\epsilon=0.1$\\
\midrule
Teacher & 75.39 & 76.69~$\uparrow$\\
\midrule
KD & 76.00 & 75.02~$\downarrow$\\  
KD-pt & 74.81 & 74.13~$\downarrow$\\
KD-sim & 74.40 & 74.01~$\downarrow$\\
\bottomrule
\end{tabular}
\label{tb:label-smooth-failure}
}
\hfill
\parbox{.55\linewidth}{
\centering

\includegraphics[width=0.5\columnwidth]{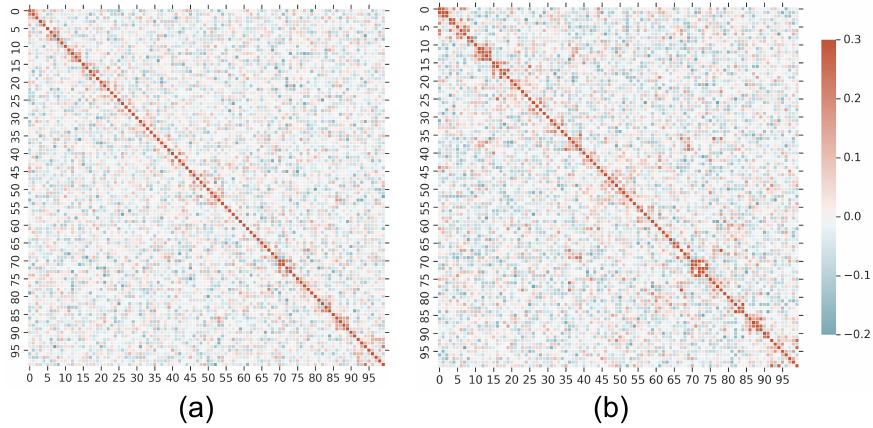}
\captionof{figure}{\small On CIFAR-100, for teacher with LS ($\epsilon=0.1$), we plot (a) Pearson correlations with $T=10$, and (b) cosine similarities computed from the weights of the final logits layer.}
\label{fig:heatmap-label-smooth-failure}
}
\end{table}

\begin{figure*}[t]
\centering
\vspace{-1.5em}
\includegraphics[scale=0.515]{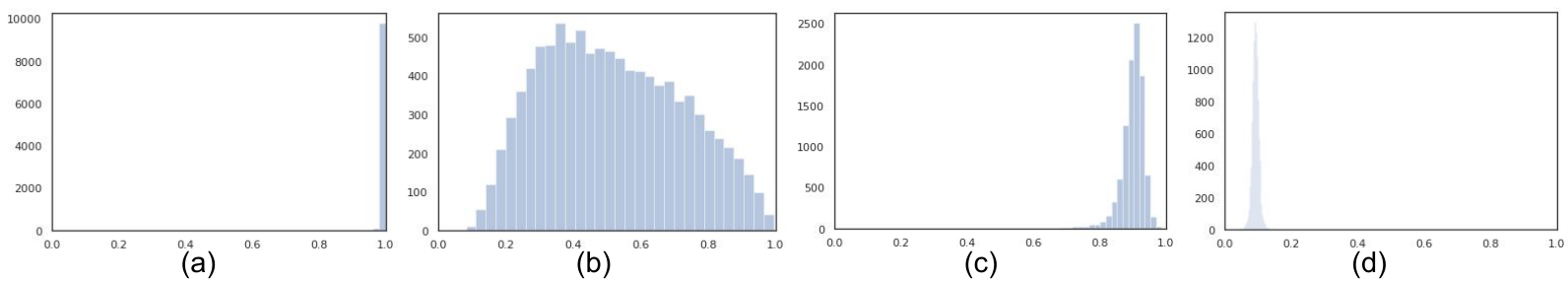}
\vspace{-0.5em}
\caption{\small On CIFAR-100, we plot the histogram of ResNet-56 teacher's confidence on ground-truth $\evp_t$, with different levels of label smoothing factor $\epsilon$: (a) $\epsilon=0.0$ and $T=1$; (b) $\epsilon=0.0$ and $T=5$; (c) $\epsilon=0.1$ and $T=1$ and (d) $\epsilon=0.1$ and $T=3$. The distribution of $\evp_t$ becomes skewed after enabling label smoothing.}
\label{fig:histogram-pt}
\vspace{-1em}
\end{figure*}

Having a good understanding of $\KD$ enables us to diagnose failure cases.
\citet{muller2019does} observed that although label smoothing (LS) improves teacher model's quality, it results in a worse student model when applying $\KD$.
Verified on CIFAR-100 in Table~\ref{tb:label-smooth-failure}, we found that the unfavorable distillation performance could be attributed to two factors -- Firstly, as argued by the~\citet{muller2019does} and illustrated in Figure~\ref{fig:heatmap-label-smooth-failure}, $\LS$ destroys class relationship information. Secondly, we found that the skewed teacher's prediction distribution on the ground-truth (see Figure~\ref{fig:histogram-pt}) also hinders the effectiveness of $\KD$, especially that of $\kdpt$, since gradient rescaling will be less effective. Results of $\kdsim$ and $\kdpt$ from last two columns of Table~\ref{tb:label-smooth-failure} verifies our hypothesis.

For another failure case, \citet{mirzadeh2019improved} showed that the `distilled' student model's quality gets worse as we continue to increase teacher model's capacity. Larger capacity teacher might overfit, and predict high (uniform) confidence on the ground truth on all the examples; and thereby hindering the effectiveness of gradient rescaling. Another explanation could be that there exists an optimal model capacity gap between the teacher and student, which could otherwise result in an inconsistency between teacher's prediction confidence on the ground-truth, and the desired example difficulty for the student.
Perhaps, an `easy' example for larger capacity teacher is overly difficult for the student. %

\section{Conclusion and Future Work}  \label{sec:conclusions} We provide novel techniques to better understand the mechanism of knowledge distillation~(KD).
Through systematic analyses, we uncover two key beneficial effects of $\KD$ over label smoothing. Firstly, supervision from teacher's prediction on the ground-truth rescales student's gradients for different training examples. Secondly, teacher's probability mass on the incorrect classes reveals class relationships by injecting prior knowledge of the optimal geometry of student's output layer.
These effects also explain why sometimes a better teacher may not be suitable for distillation, and self-distillation gives quality gains.
To have a closer look at these two effects, we proposed partial-distillation methods, and evaluated their performance on both synthetic and real-world datasets. Experimental results support our claims, help diagnose unpleasent results, and inspire ways to improve $\KD$.
In future work, we would like to extend our understanding of knowledge distillation under different data distributions, e.g., uniform vs long-tail distribution; and also consider the effect of noisy inputs and labels. We would like to also investigate other cheaper and effective ways of distillation \emph{e.g.,} looking into approximate versions of $\kdtopk$.

\section{Appendix} 
\subsection{Analyzing Mechanisms of Knowledge Distillation}
\label{sec:si-proof}

{\bf Proposition~\ref{prop:distance}.} (\emph{Paper}) \emph{With $\KD$, the optimal solution of student's final logit layer weights $\{\vw^*_k,~\forall k\in[K]\}$ enforces different inter-class distances based on teacher's probability distribution $\vp$: 
$$\|\vh - \vw^*_i\|^2 < \|\vh - \vw^*_j\|^2~~~\text{iff}~~~\evp_i > \evp_j,~\forall i,j \in [K] \backslash t,$$
where $\vh$ is the activations of the penultimate layer.}

\begin{proof}
At the optimal solution of the student, equating gradient in~\Eqref{eq:dist_grad} to $0$, we get:
\begin{equation}
(1-\lambda)(q^*_k - y_k) + \frac{\lambda}{T} (\evqt{k}^* - \evpt{k}) = 0 \implies (1-\lambda) q^*_k + \frac{\lambda}{T} \evqt{k}^* = (1-\lambda) y_k + \frac{\lambda}{T} \evpt{k}
\label{eq:opt_student_eq}
\end{equation}

Using a similar proof technique as~\citet{muller2019does}, $\|\vh - \vw^*_k\|^2 = \|\vh\|^2 + \|\vw^*_k\|^2 - 2 \vh^\top\vw^*_k$, where $\vh$ is the penultimate layer activations, and $\vw^*_k$ are the weights of the last logits layer for class $k \in [K]$. Note that $\|\vh\|^2$  is factored out when computing the $\softmax$, and $\|\vw^*_k\|^2$ is usually a (regularized) constant across all classes. Equating $\evz^*_k=\vh^\top\vw^*_k$, and using the property $\softmax(\vz) = \softmax(\vz + c),~\forall c\in\sR$, we get:
$$\evq^*_k = \softmax(\evz^*_k) = \softmax(\vh^\top\vw^*_k) =\softmax\Big(-\frac{1}{2}\|\vh - \vw^*_k\|^2\Big)$$

Plugging the above in~\eqref{eq:opt_student_eq}, we get:
\begin{equation*}
(1-\lambda) \softmax\Big(-\frac{1}{2}\|\vh - \vw^*_k\|^2\Big) + \frac{\lambda}{T} \softmax\Big(-\frac{1}{2T}\|\vh - \vw^*_k\|^2\Big) = (1-\lambda) y_k + \frac{\lambda}{T} \evpt{k}
\end{equation*}
Note that $\softmax$ is a monotonically increasing function, and we can rewrite LHS as:
\begin{equation*}
g\Big(-\frac{1}{2}\|\vh - \vw^*_k\|^2; \lambda, T\Big) = (1-\lambda) y_k + \frac{\lambda}{T} \evpt{k},
\end{equation*}
where $g(x; \lambda, T): \mathbb{R} \to \mathbb{R}$ is a monotonically increasing function, parameterized by $\lambda$ and $T$. Now for the incorrect classes, equating $y_k = 0$, and noting that $\softmax$ temperature scaling preserves relative ordering of teacher's probabilities $\evpt{k}$ proves the claim.
\end{proof}

{\bf Proposition~\ref{prop:conf}.}~[\textbf{Gradient Rescaling}] (\emph{Paper})
\emph{Given any example $(\vx, \vy) \in \gX \times \gY$, let $\evpt{t} = \evqt{t} + \evct{t} + \eta$, where $\evct{t} > 0$ is teacher's relative prediction confidence on the ground-truth class $t\in [K]$ and $\eta$ is a zero-mean random noise. Then the logit's gradient rescaling factor by applying $\KD$ is given by}:
$$\E_\eta\left[\frac{\partial^{KD}_t}{\partial_t}\right] = \E_\eta\left[\frac{\sum_{i\in[K]\backslash t}\partial^{KD}_i}{\sum_{i\in[K]\backslash t}\partial_i}\right] = (1 - \lambda) + \frac{\lambda}{T}\left(\frac{\evct{t}}{1 - \evq_t}\right).$$

\begin{proof}
We first consider the ground-truth class $t\in[K]$. Using $y_t=1$, $\evpt{t} = \evqt{t} + \evct{t} + \eta$ and $\E[\eta] = 0$ in \eqref{eq:grad_scale}, we get:
\begin{align*}
\E_{\eta}\left[\partial^{KD}_t/\partial_t\right] &= (1-\lambda) + \frac{\lambda}{T}\left(\frac{\evct{t}}{1 - \evq_t}\right)
\end{align*}

Now, sum of the incorrect class gradients is given by:%

\begin{align*}
\sum_{i \in[K]\backslash t} \partial^{\KD}_i &= \sum_{i\in[K]\backslash t}\big[ (1-\lambda){\evq}_i + \frac{\lambda}{T}(\evqt{i} - \evpt{i})\big]\\
& = (1-\lambda)(1-{\evq}_t) + \frac{\lambda}{T}(\evpt{t} - \evqt{t}) = -\partial^{\KD}_t
\end{align*}

Penultimate equality follows from $\vq,~\vpt$ and $\vqt$ being probability masses. Similarly applies for $\partial_i$, and hence the proof.

\end{proof}

\subsection{Experimental details}\label{sec:si-detail}

\begin{figure*}[t]
    \centering
    \begin{subfigure}[b]{0.3\textwidth}
            \includegraphics[width=\textwidth]{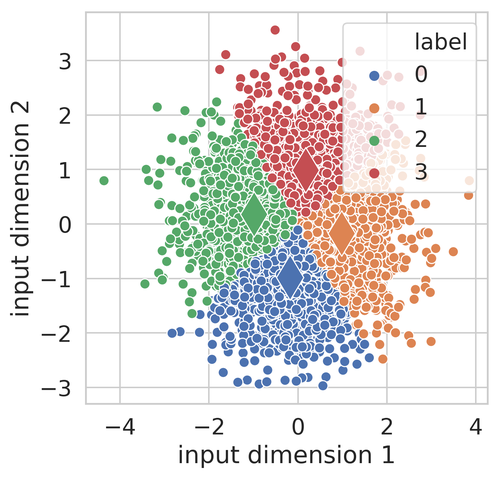}
            \subcaption{}
            \label{fig:synthetic-sim=0.0-m=0}
    \end{subfigure}
    ~
    \begin{subfigure}[b]{0.3\textwidth}
            \includegraphics[width=\textwidth]{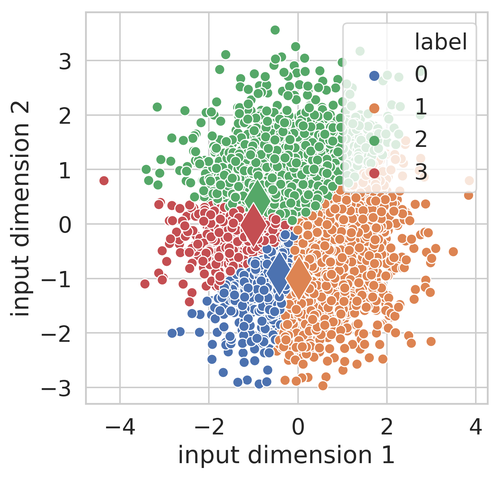}
            \subcaption{}
            \label{fig:synthetic-sim=0.9-m=0}
    \end{subfigure}
    ~
    \begin{subfigure}[b]{0.3\textwidth}
            \includegraphics[width=\textwidth]{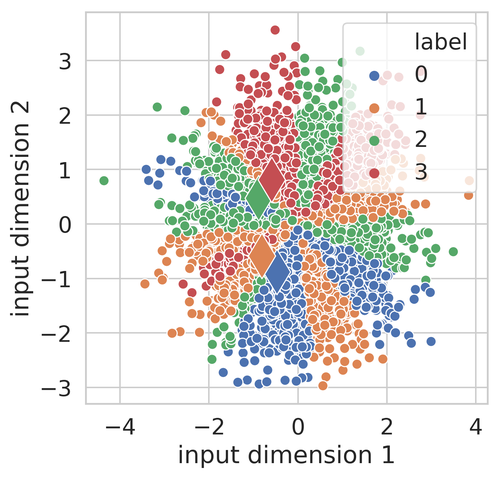}
            \subcaption{}
            \label{fig:synthetic-sim=0.9-m=2}
    \end{subfigure}
    
    \caption{Visualization of 5K synthetic data points (with input dimensionality $d=2$) on 2-D plane. We use $K=4$, $C=2$, means there are two super-classes, one associate with label \{0,1\} and the other one associate with label \{2,3\}. We vary $\tau$ and $M$ and produce 3 plots: (a) $\tau=0.0$, no sine function is used; (b) $\tau=0.9$, no sine function is used and (c) $\tau=0.9$, $M=2$.}
    \label{fig:synthetic-sim-m}
\end{figure*}

\paragraph{Implementation of $\KD$.} 
In practice, the gradients from the RHS of~\Eqref{eq:dist_grad} are much smaller compare to the gradients from LHS when temperature $T$ is large. Thus, it makes tuning the balancing hyper-parameter $\lambda$ become non-trivial. To mitigate this and make the gradients from two parts in the similar scale, we multiply $T^2$ to the RHS of~\Eqref{eq:dist_grad}, as suggested in~\citep{hinton2015distilling}.

\paragraph{Implementation of $\kdsim$.}
When synthesizing the teacher distribution for $\kdsim$, we use $\vrho^{\text{sim}} = \softmax(\hat{\vw}_t \Hat{\mW}^\top)$, where $\Hat{\mW}$ is the $l_2$-normalized logit layer weights and $\hat{\vw}_t$ is the $t$-th row of $\Hat{\mW}$. However, the cosine similarities computed for $\softmax$ are limited in the range of $[0,1]$. Therefore the resulting distribution is highly likely to be uniform. To mitigate this and bring more resolution to be cosine similarities, we use the following:
$$\vrho^{\text{sim}} = \softmax((\hat{\vw}_t \Hat{\mW}^\top)^{\alpha}/\beta).$$
Here $\alpha<1$ is a hyper-parameter to amplify the resolution of cosine similarities, $\beta$ is another hyper-parameter indicating the temperature for $\softmax$.

\begin{table}[t!]
    \begin{subtable}[h]{0.45\textwidth}
        \centering
        \begin{tabular}{l | l}
        \toprule
        {Method} & {Hyper-parameter setting}\\
        \hline
        LS & $\epsilon=0.3$ for any $\tau$.\\
        \hline
        \multirow{6}{*}{KD} & $\lambda=0.7, T=3$ when $\tau=0.0$\\
        & $\lambda=0.7, T=5$ when $\tau=0.1$\\
        & $\lambda=0.7, T=2$ when $\tau=0.2$\\
        & $\lambda=0.7, T=3$ when $\tau=0.3$\\
        & $\lambda=0.7, T=10$ when $\tau=0.4$\\
        & $\lambda=0.7, T=5$ when $\tau=0.5$;\\
        \hline
        \multirow{6}{*}{KD-pt} & $\lambda=0.7, T=3$ when $\tau=0.0$\\
        & $\lambda=0.7, T=5$ when $\tau=0.1$\\
        & $\lambda=0.7, T=2$ when $\tau=0.2$\\
        & $\lambda=0.7, T=3$ when $\tau=0.3$\\
        & $\lambda=0.7, T=10$ when $\tau=0.4$\\
        & $\lambda=0.7, T=5$ when $\tau=0.5$\\
        \hline
        KD-sim & $\lambda=0.7,\alpha=0.5,\beta=0.5$ for any $\tau$\\
        \bottomrule
       \end{tabular}
       \caption{Synthetic}
       \label{tb:hparam_synthetic}
    \end{subtable}
    \hfill
    \begin{subtable}[h]{0.45\textwidth}
        \centering
        \begin{tabular}{l | l}
        \toprule
        {Method} & {Hyper-parameter setting}\\
        \hline
        LS & $\epsilon=0.1$\\
        KD & $\lambda=0.3,T=5$\\
        KD-pt & $\lambda=0.3,T=5$\\
        KD-sim & $\lambda=0.3,\alpha=0.3,\beta=0.3$\\
        KD-topk & $k=25,\lambda=0.5,T=5$\\
        \bottomrule
       \end{tabular}
       \caption{CIFAR-100}
       \label{tb:hparam_cifar}
    \end{subtable}
    \hfill
    \begin{subtable}[h]{0.45\textwidth}
        \centering
        \begin{tabular}{l | l}
        \toprule
        {Method} & {Hyper-parameter setting}\\
        \hline
        LS & $\epsilon=0.1$\\
        KD & $\lambda=0.7,T=20$\\
        KD-pt & $\lambda=0.2,T=25$\\
        KD-sim & $\lambda=0.3,\alpha=0.5,\beta=0.3$\\
        KD-topk & $k=500,\lambda=0.5,T=3$\\
        \bottomrule
       \end{tabular}
       \caption{ImageNet}
       \label{tb:hparam_imagenet}
    \end{subtable}
    \hfill
    \begin{subtable}[h]{0.45\textwidth}
        \centering
        \begin{tabular}{l | l}
        \toprule
        {Method} & {Hyper-parameter setting}\\
        \hline
        KD & $\lambda=0.1,T=50$\\
        KD-topk & $k=100,\lambda=0.1,T=50$\\
        \bottomrule
       \end{tabular}
       \caption{Penn Tree Bank (PTB)}
       \label{tb:hparam_ptb}
    \end{subtable}
     \caption{Hyper-parameter settings for different methods on different datasets.}
     \label{tab:temps}
\end{table}

\paragraph{Synthetic dataset.}\label{sec:si-synthetic} 
For the synthetic dataset, we generate a single data-point as follows:
\begin{enumerate}
  \setlength{\itemsep}{1pt}
  \setlength{\parskip}{0pt}
  \setlength{\parsep}{0pt}
\item Randomly sample $C$ orthonormal basis vectors, denoted by $\vu_i \in\sR^d~\forall i\in[C]$.
    \item For each orthonormal basis $\vu_i$, we sample $(K/C-1)$ unit vectors $\vu_j\in\sR^d$ that are $\tau$ cosine similar to $\vu_i$.
    \item Randomly sample an input data point in $d$-dimensional feature space $\vx\sim\mathcal{N}_d(\mathbf{0}, \mathbf{I})$.
    \item Generate one-hot encoded label $\vy\in\gY$ with target: $t=\argmax_{k\in[K]} \big(\vu^\top_k \hat{\vx} + \sum_{m=1}^{M} \sin(\eva_m \vu^\top_k \hat{\vx} + \evb_m) \big)$, where $\hat{\vx}$ is the $l_2$-normalized $\vx$;  $\va,\vb\in\sR^M$ are arbitrary constants; and we refer to the controlled $\sin$ complexity term $M \in \sZ^+$ as \emph{task difficulty}.
\end{enumerate}

After producing basis vectors with procedure (1) and (2), we run procedure (3) and (4) for $|\train|$ times with fixed basis vectors to generate a synthetic dataset $\train=\{(\vx,\vy)\}$. By tuning the cosine similarity parameter $\tau$, we can control the classes correlations within the same super-class. Setting task-difficulty $M=0$ generates a linearly separable dataset, and $M>0$ generates more non-linearities by the $\sin$ function (see Figure~\ref{fig:synthetic-sim-m} in Suppl. for visualization on a toy example).

Following the procedure showed above, we get a toy synthetic dataset where we only have input dimensionality $d=2$ with $K=4$ classes and $C=2$ super-classes. Figure~\ref{fig:synthetic-sim-m} shows a series of scatter plots with different settings of class similarity $\tau$ and task difficulty $M$. This visualization gives a better understanding of the synthetic dataset and helps us imagine what it will look like in high-dimensional setting that used in our experiments. For the model used in our experiments, besides they are 2-layer network activated by $\tanh$, we use residual connection~\citep{he2016deep} and and batch normalization~\citep{ioffe2015batch} for each layer. Following~\citep{ranjan2017l2, zhang2018heated}, we found using $l_2$-normalized logits layer weight $\Hat{\mW}$ and penultimate layer $\hat{\vh}$ provides more stable results. The model is optmized by Adam~\citep{kingma2014adam} for a total of 3 million steps without weight decay and we report the best accuracy. Finally, Nvidia V100 GPU is used as the accelerator hardware.
Please refer to Table~\ref{tb:hparam_synthetic} for the best setting of hyper-parameters.

\paragraph{CIFAR-100 dataset.} 
CIFAR-100 is a relatively small dataset with low-resolution ($32\times32$) images, containing $50k$ training images and $10k$ validation images, covering $100$ classes and $20$ super-classes. It is a perfectly balanced dataset -- we have the same number of images per class (\emph{i.e.,} each class contains $500$ training set images) and $5$ classes per super-class.
To process the CIFAR-100 dataset, we use the official split from Tensorflow Dataset\footnote{\url{https://www.tensorflow.org/datasets/catalog/cifar100}}. Both data augmentation \footnote{\url{https://github.com/tensorflow/models/blob/master/research/resnet/cifar_input.py}}\footnote{We turn on the random brightness/saturation/constrast for better model performance.} for CIFAR-100 and the ResNet model\footnote{\url{https://github.com/tensorflow/models/blob/master/research/resnet/resnet_model.py}} are based on Tensorflow official implementations. Also, following the conventions, we train all models from scrach using Stochastic Gradient Descent (SGD) with a weight decay of 1e-3 and a Nesterov momentum of 0.9 for a total of 10K steps. The initial learning rate is 1e-1, it will become 1e-2 after 40K steps and become 1e-3 after 60K steps. We report the best accuracy for each model.
All experiments on CIFAR-100 are conducted by using Nvidia V100 GPU as the accelerator hardware.
Please refer to Table~\ref{tb:hparam_cifar} for the best setting of hyper-parameters.

\paragraph{ImageNet dataset.} 
ImageNet contains about $1.3$M training images and $50k$ test images, all of which are high-resolution ($224\times224$), covering $1000$ classes. The distribution over the classes is approximately uniform in the training set, and strictly uniform in the test set.
Our data preprocessing and model on ImageNet dataset are follow Tensorflow TPU official implementations\footnote{\url{https://github.com/tensorflow/tpu/tree/master/models/official/resnet}}. The Stochastic Gradient Descent (SGD) with a weight decay of 1e-4 and a Nesterov momentumof 0.9 is used. We train each model for 120 epochs, the mini-batch size is fixed to be 1024 and low precision (FP16) of model parameters is adopted. We didn't change the learning rate schedule scheme from the original implementation.
Please refer to Table~\ref{tb:hparam_imagenet} for the best setting of hyper-parameters. We used TPU-v3 as the accelerator hardware.

\paragraph{Penn Tree Bank dataset.}
We use Penn Tree Bank (PTB) dataset for word-level language modeling task using the standard train/validation/test split by~\citep{mikolov2010recurrent}. The vocabulary is
capped at 10K unique words. We consider the state-of-the-art LSTM model called AWD-LSTM proposed by~\citet{merity2017regularizing}. The model used several regularization tricks on top of a 3-layer LSTM, including DropConnect, embedding dropout, tied weight, etc. We use different capacity (indicated by hidden size and embedding size) as our Teacher and Student. To be specific, Teacher has a hidden size of 1150 and an embedding size of 400, while Student has a smaller hidden size of 600 and a smaller embedding size of 300. We follow the official implementation\footnote{\url{https://github.com/salesforce/awd-lstm-lm}} with simple changes for $\kdtopk$. Besides capacity, we keep the default hyper-parameter as in the official implementation to train our Teacher. However, when training smaller Student model, we follow another implementation\footnote{\url{https://github.com/zihangdai/mos}} to: (1) lower the learning rate to 0.2, (2) increase training epochs to 1000, (3) use 0.4 for embedding dropout rate and (4) use 0.225 for RNN layer dropout rate. We used Nvidia P100 GPU as the accelerator hardware.

\section{Additional Experiments}\label{sec:si-exp}

\begin{table}[t!]
\centering
\begin{tabular}{l|c}
\toprule
\textbf{Method} & \textbf{\% top-1 accuracy}\\
\hline
Student & 72.51\\
KD & 75.94\\
\hline
KD-rel & 74.14\\
KD-sim & 74.30\\
\hline
KD-pt+rel & 75.07\\
KD-pt+sim & 75.24\\
\bottomrule
\end{tabular}
\caption{Performance of $\kdrel$ on CIFAR-100. We report the mean result for 4 individual runs with different initializations. We use $\beta_1=0.6,\beta_2=\frac{0.1}{4},\beta_3=\frac{0.3}{95}$.}
\label{tb:exp_kdrel}
\end{table}
\paragraph{Examine optimal geometry prior effect with class hierarchy.}
In section~\ref{sec:pseudo_kd}, we mentioned the optimal geometry prior effects of $\KD$ can also be examined using existing class hierarchy. 
Suppose our data has a pre-defined class hierarchy (e.g., on CIFAR-100), we can also use it to examine the optimal geometry prior effects of $\KD$. To be specific, let $\sS_t \subset [K]\backslash t$ denote the other classes that share same parent of $t$.
We simply assign different probability masses to different types of classes:
\begin{equation}\label{eq:kdrel}
\rho_i^{\textrm{rel}} =  \begin{cases}
    \beta_1 & \text{if } i = t,\\
    \beta_2 & \text{if } i \in \sS_t,\\
    \beta_3 & \text{otherwise},
        \end{cases}
\end{equation}
where $\beta_1 > \beta_2 > \beta_3$ are a hyper-parameters we could search and optimize, and we name this method as $\kdrel$. 
As shown in Table~\ref{tb:exp_kdrel}, we found $\kdrel$ performs slightly worse  than $\kdsim$ on CIFAR-100. The trend is still hold when we compound each effect with $\kdpt$.

\bibliography{reference.bib}
\bibliographystyle{reference}
\end{document}